\newtheorem{theorem}{Theorem}
  \providecommand\BibTeX{{%
    \normalfont B\kern-0.5em{\scshape i\kern-0.25em b}\kern-0.8em\TeX}}}
\begin{document}

\title{GradXKG: A Universal Explain-per-use Temporal Knowledge Graph Explainer}


\author{Chenhan Yuan}
\affiliation{%
\institution{The University of Manchester}
   \city{Manchester}
   \country{UK}}
 \email{chenhan.yuan@manchester.ac.uk}

\author{Hoda Eldardiry}
\affiliation{%
   \institution{Virginia Tech}
   \city{Blacksburg}
   \country{U.S.}}
 \email{hdardiry@vt.edu}


\begin{abstract}
Temporal knowledge graphs (TKGs) have shown promise for reasoning tasks by incorporating a temporal dimension to represent how facts evolve over time. However, existing TKG reasoning (TKGR) models lack explainability due to their black-box nature. Recent work has attempted to address this through customized model architectures that generate reasoning paths, but these recent approaches have limited generalizability and provide sparse explanatory output. To enable interpretability for most TKGR models, we propose GradXKG, a novel two-stage gradient-based approach for explaining Relational Graph Convolution Network (RGCN)-based TKGR models. First, a Grad-CAM-inspired RGCN explainer tracks gradients to quantify each node's contribution across timesteps in an efficient "explain-per-use" fashion. Second, an integrated gradients explainer consolidates importance scores for RGCN outputs, extending compatibility across diverse TKGR architectures based on RGCN. Together, the two explainers highlight the most critical nodes at each timestep for a given prediction. Our extensive experiments demonstrated that, by leveraging gradient information, GradXKG provides insightful explanations grounded in the model's logic in a timely manner for most RGCN-based TKGR models. This helps address the lack of interpretability in existing TKGR models and provides a universal explanation approach applicable across various models. 
\end{abstract}


\begin{CCSXML}
<ccs2012>
   <concept>
       <concept_id>10010147.10010178.10010187.10010193</concept_id>
       <concept_desc>Computing methodologies~Temporal reasoning</concept_desc>
       <concept_significance>500</concept_significance>
       </concept>
 </ccs2012>
\end{CCSXML}

\ccsdesc[500]{Computing methodologies~Temporal reasoning}

\keywords{Temporal knowledge graph, explainable AI, temporal knowledge graph reasoning, event forecasting, gradients-based explanation}


\received{20 February 2007}
\received[revised]{12 March 2009}
\received[accepted]{5 June 2009}

\maketitle

\section{Introduction}
Since knowledge graphs are dynamic in nature, i.e., evolve over time, Temporal Knowledge Graphs (TKG) have a promising potential in the fields of question answering~\cite{saxena2021question,jia2021complex}, event forecasting~\cite{deng2020dynamic,li2021temporal}, and information retrieval~\cite{gottschalk2018eventkg}. Unlike conventional static knowledge graphs, which represent each fact with a triplet $(subject, relation, object)$, temporal knowledge graphs incorporate a temporal dimension to represent how facts and relations evolve over time. In general, a temporal knowledge graph represents each fact using a quadruple $(subject, relation, object, timestamp)$. Conventionally, a temporal knowledge graph is represented by decomposing it into a sequence of static knowledge graphs, each of which contains all facts at the corresponding timestamp. 

TKGs provide new perspectives and insights for many
downstream applications, e.g., disease diagnosis aid\cite{diao2021research} and stock prediction~\cite{deng2019knowledge}. The unique promise of TKGs has sparked a growing interest in reasoning over TKG. TKG reasoning (TKGR) is a task to validate whether a query relationship between two entities is true, given the context provided by the TKG. With the rise of graph neural networks (GNN), most existing TKGR methods first leverage Relational Graph Convolutional Networks (RGCNs)~\cite{schlichtkrull2018modeling}, a type of GNN well-suited for multi-relational graphs like TKGs, to encode the local graph structure into dense vector representations. Various neural architectures are then applied on top of the RGCN encodings to score the validity of the query relationship. For instance, curriculum learning and neural ordinary differential equations have been used to enhance TKGR performance~\cite{han2021learning,li2022complex}. While RGCN-based TKGR models have achieved significant improvements over general neural network models, RGCN-based TKGR models still lack explainability due to their end-to-end black-box nature. The reasoning process behind their predictions is opaque. This lack of explainability is a critical limitation because explainability is crucial for trustworthy AI systems. 

Recent work has attempted to address this limitation by designing customized model architectures that can generate reasoning paths along with the prediction results. For example, TimeTraveler~\cite{sun2021timetraveler} proposed a reinforcement learning-based explainable TKGR model that works as follows. Given a query: \emph{(Governor (Cote d'Ivoire), Make an appeal or request, ?, 2018/10/14)}, it generates the reasoning path: \emph{(Governor (Cote d'Ivoire), Praise or endorse, Party Member, 2018/10/12)$\rightarrow$(Party Member, Make an appeal or request, Citizen, 2018/9/29)}. However, existing explainable TKGR models have two critical problems: 1) They are constrained by requiring custom-designed model architectures to enable reasoning path generation. This leads to a lack of generalizability, which is a major drawback, as it prevents easy application to the myriad of most non-explainable TKGR models, i.e., RGCN-based models. 2) As illustrated in Fig~\ref{fig:comp_xtkgr}, current explainable TKGR models only provide reasoning paths with very few nodes per static knowledge graph. This sparse explanatory output fails to capture the complex interdependencies within the broader knowledge graph context. In particular, it is difficult to fully trust or evaluate the faithfulness of explanations without the ability to account for the potential impact of other entities. 

To address these challenges, it is necessary to design a universal explainer that can work for most RGCN-based TKGR models. A simple and intuitive possible approach is to follow the GNNExplainer method, by masking a node/edge in the TKG, and observing its impact on the TKGR model prediction performance. ~\cite{ying2019gnnexplainer}. 
However, to obtain the importance of all nodes across all timestamps, the GNNExplainer-based method requires repeating the test at least $\mathcal{O}(n*m)$ times; where $n$ is the number of nodes and $m$ is the number of timestamps for various inputs; as shown in Fig~\ref{fig:comp_xgnn}. This is an exhaustive process that scales poorly as the knowledge graph grows. We propose a more efficient approach to design a universal explainer. More specifically, we propose a universal TKGR model explainer that works by analyzing the model's gradients. In doing so, it provides real-time rationales for each individual prediction, elucidating the model's logic in an accessible ``explain-per-use'' fashion.

\begin{figure}[]
\centering
\includegraphics[width=0.95\columnwidth]{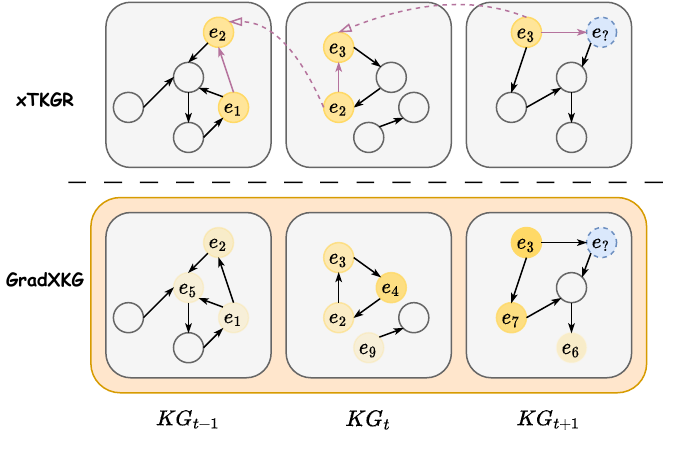}
\caption{Comparison between the proposed GradXKG and conventional explainable TKGR models (xTKGR). Node color shade represents node importance; darker yellow is more important than lighter yellow. Dashed arrows denote the same entities in different timestamps. $e?$ is the query entity.}
\label{fig:comp_xtkgr}
\end{figure}

\begin{figure}[]
\centering
\includegraphics[width=0.95\columnwidth]{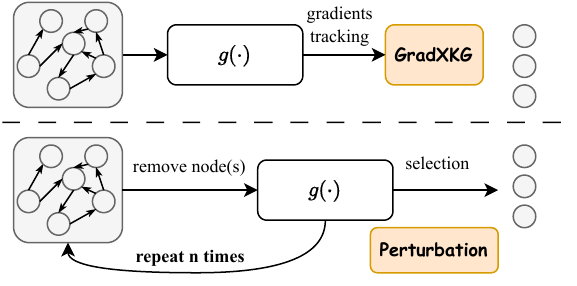}
\caption{Comparison between the proposed GradXKG and perturbation-based methods. $g(\cdot)$ denotes TKGR models. Perturbation-based methods require at least $n$ times running to determine node importance in TKG.}
\label{fig:comp_xgnn}
\end{figure}

In this paper, we propose GradXKG: a novel two-stage approach for generating explanations of RGCN-based temporal knowledge graph reasoning (TKGR) models. Our proposed GradXKG approach bridges a critical gap in universal explainable temporal knowledge graph reasoning. Our method leverages gradient information to highlight the most influential nodes across timesteps. First, a Grad-CAM-inspired RGCN explainer tracks gradients flowing into each node at each timestep, quantifying the contribution of that node to the RGCN output tensors in each run. Secondly, to extend the compatibility of our proposed approach to TKGR architectures that utilize RGCN, we propose an integrated gradients-based top-layer explainer. This proposed explainer layer tracks the contribution of RGCN output tensors toward the final prediction in the top layer of these TKGR models. Moreover, the top-layer explainer associates RGCN output tensors with prediction scores regardless of the model specifics. This allows our approach to be applicable to any neural network-based RGCN TKGR model. Together, the two explainers produce a heat map spotlighting the most critical nodes at each timestep for a given prediction. Going beyond perturbation-based approaches, our proposed ``explain-per-use'' mechanism provides explanations that are tailored to each individual prediction in a time-efficient manner. Furthermore, GradXKG enables interpretability within most RGCN-based TKGR models. This is achieved by generating explanations that are informed by gradient information.

Our main contributions can be summarized as follows:
\begin{itemize}
    \item We proposed the first novel gradient-based two-stage RGCN-based TKGR explanation approach (GradXKG) that can explain all RGCN-based TKGR models. To the best of our knowledge, this is the first effort in this research area.
    \item We proposed a Grad-CAM-based RGCN-explainer in an ``explain-per-use'' fashion, which significantly reduces time complexity compared to perturbation methods. 
    \item We proved that our proposed explainer can be approximated to the Grad-CAM GCN model if both are unsigned.
    \item Our extensive experiments demonstrate that the proposed GradXKG can outperform other explanation methods in terms of explanation sufficiency and accuracy.
\end{itemize}
\section{Related Work}
\subsection{Temporal Knowledge Graph Reasoning Models}
Temporal knowledge graph reasoning is an emerging field that aims to model the evolution of real-world events and their relationships over time. This task involves two key settings: interpolation, which focuses on completing knowledge graphs over a given time span, and extrapolation, which forecasts future facts based on historical data~\cite{li2021temporal,li2022complex}. Early work on extrapolation like Know-Evolve utilized temporal point processes to capture continuous-time dynamics~\cite{trivedi2017know}. However, graph neural networks (GNNs) have since become the dominant modeling approach due to their ability to encode both structural and temporal dependencies. In particular, relational GCNs (RGCNs) have emerged as a powerful tool for temporal knowledge graph reasoning~\cite{schlichtkrull2018modeling}. RE-NET pioneered the use of RGCNs with an autoregressive encoder to model long-term temporal patterns~\cite{jin2020recurrent}. RE-GCN further improved modeling by emphasizing graph dependency learning~\cite{li2021temporal}. Later, Zhang et al. proposed a hierarchical RGCN to learn both long global and short local representations of temporal knowledge graphs~\cite{zhang2020relational}. Han et al. adapted neural ordinary differential equations on RGCN so that they can be applied in continuous space as temporal knowledge graphs vary continuously over time~\cite{han2021learning}. To better forecast unseen entities and relations, MTKGE utilized meta-learning to sample from existing temporal knowledge graphs to simulate future scenarios~\cite{xia2022metatkg}. CENET learned both historical and non-historical dependency and distinguished the dependency type of predicted events using contrastive learning~\cite{xu2023temporal}. Li et al.consider the length, diversity, and time-variability of evolutional patterns by introducing curriculum learning and online learning~\cite{li2022complex}. Overall, the ubiquity of RGCNs in TGKR models demonstrates their importance for encoding local graph structures, on top of which more complex temporal reasoning techniques can be developed.

Several other approaches generate predictions along with validated reasons. These explainable techniques can be categorized into three main types: logic rule-based, reinforcement learning-based, and attention-based. Logic rule mining is a popular technique for explainable forecasting on temporal knowledge graphs. TLogic extracts logic rules from the graph via temporal walk estimation and then applies the rules to make predictions~\cite{liu2022tlogic}. Lin et al. train encoders to incorporate both graph structures and logic rules\cite{lin2023techs}. Reinforcement learning has also been leveraged for explainable reasoning. Sun et al. developed an RL agent that travels optimal paths on the knowledge graph to predict future events~\cite{sun2021timetraveler}. The agent's trajectory explains its decision-making. Similarly, Li et al. first find relevant event clusters and then use RL search to forecast those subgroups~\cite{li2021search}. Attention mechanisms are another way to enable explainability. Some methods learn to expand an initial query by attending to important neighboring nodes~\cite{han2021explainable}. The attended subgraphs indicate the influential regions for prediction. Jung et al. also apply graph attention to iteratively propagate attention weights towards target nodes and use these weights to provide model interpretability~\cite{jung2021learning}.
\subsection{Graph Neural Networks Explanation}
GNN explainers can be broadly classified into two distinct categories: perturbation-based methods and gradient-based methods~\cite{yuan2022explainability}. Perturbation-based explainers identify important graph components (nodes, edges, or features) by removing or masking components and measuring the resulting impact on model predictions. A significant change in predictions when a component is masked indicates high relevance. For example, GNNExplainer identifies compact subgraph structures that are important for a prediction by using mutual information to quantify the difference between predictions on the original versus perturbed graph~\cite{ying2019gnnexplainer}. SubgraphX also masks substructures but uses Shapley values to measure subgraph importance based on contribution to the model output~\cite{yuan2021explainability}. Another perturbation-based method called PGM-Explainer illustrates dependencies between important features and provides interpretations of a model's reasoning using Bayesian network concepts~\cite{vu2020pgm}.

In contrast, gradient-based explainers focus directly on analyzing gradient information flow through the neural network model itself. The key idea is that components with high gradient magnitude likely have high relevance or impact on predictions. For example, GNN-LRP utilizes layer-wise relevance propagation, a technique that redistributes relevance scores based on neural network activations, to generate detailed explanations of predictions~\cite{bach2015pixel,schnake2021higher}. GraphLIME adapts a popular local explanation method called LIME, which stands for Local Interpretable Model-Agnostic Explanations, to graph domains using nonlinear feature selection~\cite{ribeiro2016should,huang2022graphlime}. Some methods extend gradient visualization techniques like Grad-CAM, originally designed for convolutional neural networks, to graph neural networks to improve explainability~\cite{pope2019explainability,selvaraju2017grad,oquab2015object}.
\begin{figure}[]
\centering
\includegraphics[width=0.95\columnwidth]{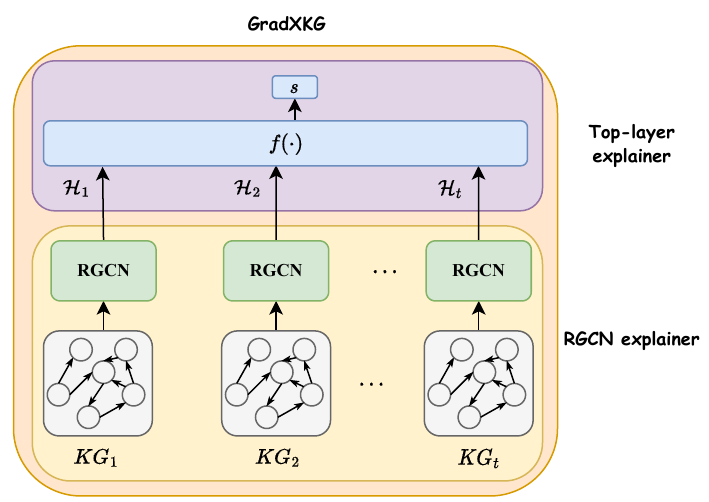}
\caption{General architecture of an RGCN-based TKGR model. We propose to explain an RGCN-based TKGR model by decomposing it into two sub-modules: RGCN explainer and Top-layer explainer.}
\label{fig:main}
\end{figure}
\section{Proposed GradXKG Method}
Conventionally, as shown in Fig.~\ref{fig:main}, an RGCN-based temporal knowledge graph reasoning model utilizes the Relational Graph Convolutional Network (RGCN) as the base model to encode graph information in each individual timestamp. On top of RGCN outputs, various reasoning methods are deployed, such as curriculum learning~\cite{li2022complex}, and autoregressive encoder~\cite{jin2020recurrent}, to yield a score ranging from 0-1 that indicates if the query triplet will happen in the future or not. The process can be formally defined as follows: Suppose a set $\mathcal{G}=\{\mathcal{G}_1, \mathcal{G}_2, \cdots. \mathcal{G}_T\}$ represents the individual static knowledge graph in each timestamp $t$, a regular temporal knowledge graph reasoning model can be defined as:
\begin{equation}
\label{eq:tkgr}
    \begin{aligned}
    s &= f(\mathcal{H}_1,\mathcal{H}_2,\cdots,\mathcal{H}_t)\\
    \mathcal{H}_i &= RGCN(\mathcal{G}_i) 
    \end{aligned}
\end{equation} where $RGCN(\cdot)$ is the RGCN model that takes each static knowledge graph as input. $f(\cdot)$ represents the complex temporal knowledge graph reasoning model that outputs a scalar score $s$. As shown in the Eq.~\ref{eq:tkgr}, our explanation method can also be decomposed into two parts: Grad-CAM-based RGCN explanation, i.e., $RGCN(\cdot)$ explainer, and a universal top-layer explanation, i.e., $f(\cdot)$ explainer.
\subsection{Gradient-based RGCN Explainer}
We start deriving our method from a simple setting: Assume that only the RGCN model itself is used in the temporal knowledge graph reasoning model. Then we have $s=RGCN(\mathcal{G}_i)$ following Eq.~\ref{eq:tkgr}. That is, the output of RGCN is a scalar that indicates the prediction of the whole model. We propose to design a gradient-based RGCN explainer method to explain this simple architecture.

Currently, the most popular gradient-based convolutional neural network (CNN) explanation method, Grad-CAM, has been successfully extended to graph convolutional networks (GCN) as GCN shares the similar convolutional layer as CNN~\cite{pope2019explainability,selvaraju2017grad}. The key idea of GCN Grad-CAM is to use the gradients of the target classification flowing into the final convolutional layer to produce a coarse localization map highlighting the important nodes for predicting the classification result. Formally, the first step of GCN-based Grad-CAM is to determine the $k$’th graph convolutional feature map at layer $l$:
\begin{equation}
\label{eq:def_gcn}
M^l_k(X,A) = \sigma(\tilde{D}^{-\frac{1}{2}}\tilde{A}\tilde{D}^{-\frac{1}{2}}M^{l-1}(X,A)W_k^l) 
\end{equation}where $X\in \mathbf{R}^{N\times d_x}$ is the representation of the nodes, $N$ is the number of nodes in the graph, and $A\in \mathbf{R}^{d_x\times d_x}$is the adjacency matrix. $\tilde{A} = A + I_{N}$ is the adjacency matrix of the undirected graph with added self-connections. $I_{N}$ is the identity matrix, $\tilde{D}_{ii} = \sum_j \tilde{A}_{ij}$, and $W_k^l$ is the $k$-th column of matrix $W_k$. Then the importance score $I_n$ can be used to generate a heat map for a node $n$ defined as follows:
\begin{equation}
    \begin{aligned}
    \alpha_k^l &= \frac{1}{N}\sum_{n=1}^{N}\frac{\partial y}{\partial M^l_{k,n}(X,A)}\\
    I_n^{GCN} &= \frac{1}{L}\sum_{l=1}^{L}ReLU(\sum_{k}\alpha_k^lM^l_{k,n}(X,A))\\
    \end{aligned}
\label{eq:gcn_score}
\end{equation}where $M^l_{k,n}(X,A))$ denotes the $k$’th feature at the $l$’th layer of node $n$. That is, the importance score of each node is a weighted sum over all feature columns associated with each node. However, despite the success of this simple yet effective gradient-based approach, it cannot be directly applied to the RGCN model. This is because the RGCN model binds different relational edges with different matrices so no $W_k^l$ can be defined for the RGCN model. This can be formally observed in the following definition of the RGCN model:
\begin{equation}
\label{eq:def_rgcn}
h_i^{(l+1)}= \sigma \left( \sum_{r \in \mathcal{R}}\sum_{j \in \mathcal{N}^r_i} \frac{1}{c_{i,r}}W_r^{(l)} h_j^{(l)} + W_0^{(l)}h_i^{(l)} \right),
\end{equation}
where $\mathcal{N}^r_i$ denotes the set of neighbor indices of node $i$ under relation $r\in\mathcal{R}$ and $h_i^(l)$ is the hidden state of node $i$ in layer $l$. $c_{i,r}$ is a normalization constant. As shown in the Eq.~\ref{eq:def_rgcn}, the hidden representation of node $i$ in layer $l+1$ is contributed by a set of relation matrices $W_r^{l}$ instead of one $W^l$. Therefore, Eq.~\ref{eq:gcn_score} cannot be directly applied to the RGCN model.

To extend the compatibility of GCN Grad-CAM to RGCN, we first convert Eq.~\ref{eq:def_rgcn} into matrix graph representation form:
\begin{equation}
\label{eq:def_rgcn_graph}
H^{(l+1)}= \sigma \left( \sum_{r\in R} A_rH^{(l)}W_r^{(l)} + I_{N}H^{(l)}W_0^{(l)}\right)
\end{equation}where $A_r$ is a relation-specific adjacency matrix as shown in the following equation. 
\begin{equation}
A_{r,i,j}=\left\{
\begin{array}{rcl}
1 & & {E(i,j)\in r}\\
0 & & {else}\\
\end{array} \right.
\end{equation} where $E(i,j)\in \mathcal{r}$ means the edge between nodes $i$ and $j$ represents relation type $r$. In this way, we can define the $k$-th RGCN feature map at layer $l$ under each relation individually. Formally, combining Eq.~\ref{eq:def_rgcn_graph} and Eq.~\ref{eq:gcn_score}, the $k$-th RGCN feature map under relation $r$ can be derived as follows: 
\begin{equation}
H^{(l+1)}_{k,r}= \sigma \left( A_rH^{(l)}W_{r,k}^{(l)} + I_{N}H^{(l)}W_{0,k}^{(l)}\right)
\end{equation}
Then the relation-dependent gradient-based weight $\alpha_{k,r}^l$ for $k$-th feature in $l$-th layer under $r$-th relation can be derived as follows:
\begin{equation}
\label{eq:rgcn_weight}
\begin{aligned}
    \alpha_{k,r}^l &= \frac{1}{N}\sum_{n=1}^{N}\frac{\partial s}{\partial H^l_{k,n,r}(X,A)}\\
    &=\frac{1}{N}\sum_{n=1}^{N}\frac{\partial y}{\partial \sigma(\cdot)}\frac{\partial \sigma(\cdot)}{\partial H^{l-1}_{k,n,r}(X,A)}\left( A_rW_{r,k}^{(l))}+I_NW_{0,k}^{(l)}\right)\\
    \end{aligned}
\end{equation} where $s$ is the final score. In this way, the weights derived here are always associated with one specific relation.

Following a similar procedure as in Eq.~\ref{eq:gcn_score}, the importance score $I_{n,r,l}$ for node $n$ from layer $l$ under relation $r$ is defined as follows:
\begin{equation}
    I_{n,r,l}^{RGCN} = ReLU(\sum_{k}\alpha_{k,r}^lH^l_{k,n,r}(X,A))
\label{eq:rgcn_final}
\end{equation} We use the average $I_{n,r,l}$ as the final importance score:
\begin{equation}
\label{eq:score_}
    I_{n,r}^{RGCN} = \frac{1}{L}\sum_{l=1}^{L}I_{n,r,l}^{RGCN}
\end{equation} Note that the implementation complexity of Eq.~\ref{eq:rgcn_final} increases as the number of relations increases. Therefore, we propose the following approximation that averages over the relation $r$ dimension:
\begin{equation}
\label{eq:score}
    I_{n}^{RGCN} \approx \frac{1}{R} \sum_{r=1}^{R} \frac{1}{L}\sum_{l=1}^{L}I_{n,r,l}^{RGCN}
\end{equation}
\subsection{Theoretical Analysis of RGCN Grad-CAM} We provide a theoretical analysis of the proposed Grad-CAM-based RGCN explainer in this section. First, note that the derived RGCN explainer is equivalent to the original GCN Grad-CAM if the output is unsigned. Formally, we define the equivalence theorem as follows: 
\begin{theorem}
$I_{n}^{RGCN}\approx I_{n}^{GCN}$\; iff. both $I_{n}^{RGCN}$ and $I_{n}^{GCN}$ are unsigned. 
\end{theorem}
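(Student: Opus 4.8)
The plan is to read ``unsigned'' as the assumption that the arguments of every $ReLU$ in Eq.~\ref{eq:gcn_score} and Eq.~\ref{eq:rgcn_final} are nonnegative, so that $ReLU$ collapses to the identity map. Under this reading the theorem reduces to a purely linear comparison of the two importance scores, so I would first strip the $ReLU$ from both $I_n^{GCN}$ and $I_n^{RGCN}$ to obtain the weighted sums $\sum_k \alpha_k^l M^l_{k,n}$ and $\sum_k \alpha_{k,r}^l H^l_{k,n,r}$ respectively, and then argue equivalence by matching terms while controlling the error introduced by the relation-average of Eq.~\ref{eq:score}.

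For the forward direction (unsigned $\Rightarrow$ approximate equality), I would exhibit a structural correspondence between the relation-specific RGCN quantities and the GCN quantities. The key observation is that a single relation type reduces the RGCN layer of Eq.~\ref{eq:def_rgcn_graph} to exactly the GCN layer of Eq.~\ref{eq:def_gcn}: the combined operator $A_r + I_N$ plays the role of the self-connected normalized adjacency $\tilde{D}^{-1/2}\tilde{A}\tilde{D}^{-1/2}$, the per-relation weight column $W_{r,k}^{(l)}$ plays the role of $W_k^l$, and the normalization $\frac{1}{c_{i,r}}$ in Eq.~\ref{eq:def_rgcn} matches the symmetric normalization. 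Consequently $H^l_{k,n,r}$ and $\alpha_{k,r}^l$ reduce to $M^l_{k,n}$ and $\alpha_k^l$, and the relation-average in Eq.~\ref{eq:score} becomes an average of GCN-like terms that collapses to $I_n^{GCN}$.

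For the converse (approximate equality $\Rightarrow$ unsigned) I would argue by contrapositive. The GCN score applies a single $ReLU$ to the sum over $k$, whereas the RGCN score applies one $ReLU$ per relation and then averages over $r$ in Eq.~\ref{eq:score}. By convexity of $ReLU$ (Jensen's inequality), $\frac{1}{R}\sum_r ReLU(x_r) \ge ReLU\!\left(\frac{1}{R}\sum_r x_r\right)$, with equality failing as soon as the per-relation arguments $x_r = \sum_k \alpha_{k,r}^l H^l_{k,n,r}$ carry mixed signs. Hence whenever the scores are signed the two orderings of $ReLU$ and averaging disagree, so approximate equality can only hold in the unsigned regime where $ReLU$ is the identity and the order is immaterial.

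The main obstacle I anticipate is making the collapse of the multi-relation structure onto a single GCN operator quantitative: both the replacement of the family $\{A_r\}_{r\in\mathcal{R}}$ together with the self-loop term $I_N H^{(l)} W_{0,k}^{(l)}$ by one normalized adjacency, and the interchange of $ReLU$ with the relation-average, introduce approximation error that the $\approx$ in the statement must absorb. I would bound this error and show that it vanishes precisely under the unsigned assumption, which is exactly what neutralizes the $ReLU$ nonlinearity and renders the remaining comparison linear and exact up to the relation-averaging of Eq.~\ref{eq:score}.
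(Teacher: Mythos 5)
Your core mechanism is the same as the paper's: read ``unsigned'' as the condition that neutralizes $ReLU$, strip it from both scores, and reduce the theorem to identifying the relation-averaged RGCN terms with the GCN terms. Where you diverge is in how that identification is justified and in what you do with the converse. The paper's proof is purely algebraic: after removing $ReLU$ it swaps the order of summation to write $I_n^{RGCN} = \frac{1}{L}\sum_{l}\sum_{k}\beta_{k,n}^{l}(X,A)$ with $\beta_{k,n}^{l} = \frac{1}{R}\sum_{r}\alpha_{k,r}^{l}H^{l}_{k,n,r}$, and then simply asserts that this relation-averaged quantity is equivalent to $\alpha_k^l M^l_{k,n}$; the $\approx$ absorbs everything else. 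You instead try to ground the identification in a structural reduction of each relation slice to a GCN, which is more ambitious but overstates the correspondence: RGCN's self-loop carries its own weight $W_0^{(l)}$ (so $A_r+I_N$ does not act through a single weight the way $\tilde{A}$ does), the $1/c_{i,r}$ normalization is row-wise rather than symmetric, and the per-relation feature maps $H^{l}_{k,r}$ are computed from the full hidden state $H^{(l)}$ aggregated over all relations, so they are not standalone single-relation GCNs. These are precisely the slack that the paper's one-line assertion hides, so your route is not wrong so much as it makes explicit (and would have to bound) the imprecision the paper leaves implicit. Your Jensen/contrapositive argument for the ``only if'' direction is genuinely new --- the paper states an iff but only ever proves the ``if'' direction --- though as written it is incomplete: convexity compares $\frac{1}{R}\sum_{r}ReLU(x_r)$ against $ReLU\left(\frac{1}{R}\sum_{r}x_r\right)$, and relating the latter to $I_n^{GCN}$ (whose $ReLU$ wraps a different argument, $\sum_k \alpha_k^l M^l_{k,n}$) still requires the same structural identification, so the contrapositive does not close without it.
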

\begin{proof}
To prove the equivalence theorem, we first convert Eq.~\ref{eq:gcn_score} into unsigned format by removing the $ReLU(\cdot)$ function: 
\begin{equation}
    I_n^{GCN} = \frac{1}{L}\sum_{l=1}^{L}\sum_{k}\alpha_k^lM^l_{k,n}(X,A)
\end{equation}
Then the $I_n^{RGCN}$ can be derived as follows:
\begin{equation}
    \begin{aligned}
    I_n^{RGCN}&=\frac{1}{R}\sum_{r=1}^{R}\frac{1}{L}\sum_{l=1}^{L}I_{n,r,l}\\
    &=\frac{1}{R}\sum_{r=1}^{R}\frac{1}{L}\sum_{l=1}^{L} ReLU(\sum_{k}\alpha_{k,r}^lH^l_{k,n,r}(X,A))\\
    &\Rightarrow \frac{1}{L}\sum_{l=1}^{L}\sum_{k}\frac{1}{R}\sum_{r=1}^{R}\alpha_{k,r}^lH^l_{k,n,r}(X,A)\;\text{(unsigned)}\\
    &= \frac{1}{L}\sum_{l=1}^{L}\sum_{k}\beta_{k,n}^{l}(X,A)
    \end{aligned}
\end{equation} Comparing $I_n^{RGCN}$ with $I_n^{GCN}$, the only difference is between $\beta_{k,n}^{l}(X,A)$ and $\alpha_k^lM^l_{k,n}(X,A)$. However, we consider that $\beta_{k,n}^{l}(X,A)$ represents the averaged $k$-th feature along $r$ dimension, which is equivalent $\alpha_k^lM^l_{k,n}(X,A)$. Therefore, we have $I_{n}^{RGCN}\approx I_{n}^{GCN}$. 
\end{proof}
Directly applying Eq.~\ref{eq:def_rgcn} to relational graphs in practice can lead to over-fitting on rare relations as the number of relations can rapidly grow. Therefore, the original RGCN provides a linear combination of basis transformations to represent relation matrices $W_r$. It is defined as follows:
\begin{equation}
    W_r^{(l)} = \sum_{b=1}^{B}a_{rb}^{(l)}V_b^{(l)}
\end{equation} where $V_b^{l}\in \mathbf{R}^{d_r\times d_o}$ is the $b$-th basis in layer $l$. Note that this decomposition method is consistent with Theorem 1. 
\begin{theorem}
THEOREM 1 holds when $W_r^{(l)} = \sum_{b=1}^{B}a_{rb}^{(l)}V_b^{(l)}$
\end{theorem}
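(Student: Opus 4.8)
The plan is to show that introducing the basis decomposition does not alter any step of the proof of Theorem 1, because the decomposition is merely a linear reparameterization of each relation matrix $W_r^{(l)}$. First I would substitute $W_r^{(l)} = \sum_{b=1}^{B}a_{rb}^{(l)}V_b^{(l)}$ into the per-relation feature map, so that its $k$-th column becomes
\begin{equation}
H^{(l+1)}_{k,r}= \sigma \left( A_rH^{(l)}\sum_{b=1}^{B}a_{rb}^{(l)}V_{b,k}^{(l)} + I_{N}H^{(l)}W_{0,k}^{(l)}\right),
\end{equation}
where $V_{b,k}^{(l)}$ is the $k$-th column of basis $V_b^{(l)}$. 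The crucial observation is that this expression is still linear in the (now basis-mediated) relation weights, and the self-loop term $I_N H^{(l)} W_{0,k}^{(l)}$ is untouched.

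Next I would recompute the gradient-based weight $\alpha_{k,r}^{l}$ of Eq.~\ref{eq:rgcn_weight} under the decomposition. Because differentiation commutes with the finite linear combination $\sum_{b}a_{rb}^{(l)}V_{b,k}^{(l)}$, the only change in $\alpha_{k,r}^{l}$ is that the factor $A_r W_{r,k}^{(l)}$ is replaced by $A_r\sum_{b}a_{rb}^{(l)}V_{b,k}^{(l)}$; every other factor, including $\frac{\partial y}{\partial \sigma(\cdot)}$ and the self-connection term, is formally identical. Hence both $\alpha_{k,r}^{l}$ and $H^{l}_{k,n,r}$ remain well-defined per-relation quantities of exactly the same type as in the non-decomposed case.

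I would then verify that the unsigned reduction used in Theorem 1 still applies verbatim: dropping the $ReLU(\cdot)$ and exchanging the order of summation over $r$, $k$, and $l$ yields the same averaged feature $\beta_{k,n}^{l}(X,A)=\frac{1}{R}\sum_{r=1}^{R}\alpha_{k,r}^{l}H^{l}_{k,n,r}(X,A)$, since none of these manipulations depends on how $W_r^{(l)}$ is parameterized internally. Consequently $I_n^{RGCN}$ collapses to $\frac{1}{L}\sum_{l}\sum_{k}\beta_{k,n}^{l}(X,A)$ exactly as before, and the comparison with the unsigned $I_n^{GCN}$ goes through unchanged, giving $I_n^{RGCN}\approx I_n^{GCN}$ under the same unsigned condition, in both directions of the iff.

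The main obstacle I anticipate is not the algebra but justifying that the shared bases $V_b^{(l)}$ do not destroy the relation-wise structure on which Theorem 1 relies. Unlike independent matrices $W_r^{(l)}$, the bases are common across all relations, so one might worry that the per-relation feature maps are no longer separable. I would resolve this by emphasizing that the coupling enters only through the scalar coefficients $a_{rb}^{(l)}$, which keep each $H^{(l+1)}_{k,r}$ indexed by a single relation $r$; the averaging over $r$ therefore still produces a bona fide mean feature, and the equivalence argument of Theorem 1 is preserved.
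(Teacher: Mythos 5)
Your proof is correct and takes essentially the same route as the paper: both arguments rest on the observation that the basis decomposition is a pure reparameterization of $W_r^{(l)}$ that leaves the per-relation quantities $\alpha_{k,r}^{l}$ and $H^{l}_{k,n,r}(X,A)$ (and hence $I_n^{RGCN}$) formally unchanged, so Theorem 1 carries over verbatim. The only cosmetic difference is that the paper first compacts the sum into a matrix product $W_r^{(l)} = U_r^{l}B^{(l)}$ and argues by shape-equivalence with the original $W_r^{(l)}$, whereas you substitute the sum directly and invoke linearity of differentiation plus the fact that the shared bases couple relations only through the scalars $a_{rb}^{(l)}$ — the same argument in slightly more explicit form.
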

\begin{proof}
To prove Theorem 2, we first convert the decomposition equation into a matrix form:
\begin{equation}
    W_r^{(l)} = U_r^{l}B^{(l)}
\end{equation} where $B^{l}\in \mathbf{R}^{B\times d_r \times d_o}$ is the stack of the set of all basis. $U_r^{l}\in \mathbf{R}^{1\times B}$ is a linear combination of all coefficients $a_{rb}^{(l)}$. We can therefore derive the hidden layer of RGCN under basis decomposition as follows:
\begin{equation}
    H^{(l+1)}= \sigma \left( \sum_{r\in R} A_rH^{(l)}[U_r^{l}B^{(l)}] + I_{N}H^{(l)}W_0^{(l)}\right)
\end{equation} Note that $[U_r^{l}B^{(l)}]$ is equivalent to the original $W_r^{l}$ in terms of matrix shape. Therefore, the computation of $\alpha_{k,r}^{l}$ and $H_{k,n,r}^{l}(X,A)$ (Eq.~\ref{eq:rgcn_weight}$\sim$\ref{eq:score})remains unchanged. As the final $I_n^{RGCN}$ preserves the same format, Theorem 1 still holds.
\end{proof}


\subsection{Top-Layer Integration}
\subsubsection{Top-Layer Explanation Method}
We proposed a top-layer explanation method that is model-agnostic so it can work with any model type in the layers above the RGCN. In particular, we utilize the integrated gradients method to quantify the contribution of each tensor to the model prediction for the layers above the RGCN. Integrated gradients is an explanation method by attributing a prediction to individual features~\cite{sundararajan2017axiomatic}. The integrated gradients method then calculates the contribution of each feature by integrating the gradients of the model's output with respect to that feature. This calculation is carried out along a path from a baseline input to the actual input. Formally, the integrated gradients for a one-dimensional tensor $x$ can be defined as follows:
\begin{equation}
\label{eq:ig}
IG(x)::=(x-x')\int_{\alpha=0}^{1}\frac{\partial f(x'+\alpha(x-x'))}{\partial x}\,d\alpha
\end{equation}where $f(\cdot)$ denotes the general neural network function and $x'$ is the base input. In practice, $x'$ is set to random or zeros.

Note that the integral part of Eq.~\ref{eq:ig} is not applicable in neural networks. Therefore, we approximate the Eq.~\ref{eq:ig} by summing over small intervals. Then we have:
\begin{equation}
\label{eq:ig_}
IG(x)\approx (x-x')\sum_{\alpha=1}^{p}\frac{\partial f(x'+\frac{p}{\alpha}(x-x'))}{\partial x}\frac{1}{p}
\end{equation}where $p$ is the sampling times.
\subsubsection{Explanation Integration}
Our proposed RGCN explainer provides node-level explanations by quantifying each node's contribution to the final predicted scalar score $s$, as shown in Eq.~\ref{eq:rgcn_weight}. However, this formulation assumes that the RGCN model outputs $s$ directly, which is not always true, since many models integrate RGCN as an intermediate component. In this case, additional processing is applied to the RGCN outputs before producing the final prediction. Our proposed RGCN explainer further considers cases when models include intermediate RGCN components by propagating explanations through intermediate components to subsequent layers. For example, given an intermediate tensor $t$ generated by the RGCN, we compute explanations in two steps. First, we compute the importance of each node to generate $t$. Second, we compute the importance of $t$ to generate the final prediction. Finally, each node's final explanation is the sum of the nodes contribution to $t$ combined with $t$'s contribution to the final prediction. More formally, for a model defined as in Eq.~\ref{eq:tkgr}, the derivation of the final predicted score for any node $n$ can be expressed as:
\begin{equation}
\label{eq:final_eq}
\begin{aligned}
    I_n &= \frac{\partial s}{\partial n}\\
    &= \frac{\partial s}{\partial f}\frac{\partial f}{\partial RGCN(\mathcal{G}_t)}\\
    &=f'(\mathcal{H}_t)\frac{\partial f}{\partial RGCN(\mathcal{G}_t)}\\
    &=IG(\mathcal{H}_t)\cdot I_n^{RGCN}\\
    \end{aligned}
\end{equation}
\section{Experiments}
\begin{table*}[]
\begin{tabularx}{\textwidth}{l >{\raggedright\arraybackslash}X >{\raggedright\arraybackslash}X >{\raggedright\arraybackslash}X>
{\raggedright\arraybackslash}X >{\raggedright\arraybackslash}X >{\raggedright\arraybackslash}X>
{\raggedright\arraybackslash}X >{\raggedright\arraybackslash}X >{\raggedright\arraybackslash}X}
\toprule
\multirow{3}{*}{Models} & \multicolumn{9}{c}{N=5}  \\ \cmidrule{2-10} 
& \multicolumn{3}{c}{ICEWS14} & \multicolumn{3}{c}{ICEWS0515} & \multicolumn{3}{c}{ICEWS18} \\ \cmidrule{2-10} 
& Fidelity     & Stability  &Time Cost & Fidelity      & Stability   &Time Cost & Fidelity     & Stability  &Time Cost      \\ \midrule
TLogic      & 0.50         & 0.67         & 1.0          & 0.58          & 0.55         & 1.1         & 0.55         & 0.58         & 1.3              \\
TimeTraveler    & 0.59   & 0.69   & 1.6    & 0.57          & 0.54        & 1.5        & 0.55         & 0.68         & 1.7              \\
per-RE-NET    & 0.77   & 0.85  & 76.6          & 0.76          & 0.87         & 70.9         & 0.86        & 0.82         & 78.0             \\
per-CEM          & 0.81         & 0.95           & 97.8          & 0.88         & 0.94         & 92.2        & 0.83        & 0.89  & 98.5              \\
IG-RE-NET     & 0.48    & 0.55    & 2.5   & 0.47          & 0.57         & 3.2         & 0.41         & 0.45         & 2.4         \\
IG-CEM   & 0.57   & 0.53   & 1.3   & 0.44   & 0.46   & 1.5 & 0.50         & 0.45         & 1.9 \\ \midrule
xRE-NET     & 0.61    & 0.79    & 1.2   & 0.73          & 0.77         & 1.1         & 0.65         & 0.73         & 1.3          \\
xCEM   & 0.70   & 0.82   & 1.0   & 0.71   & 0.78   & 1.0 & 0.74         & 0.76         & 1.0 \\ \bottomrule
\end{tabularx}
\caption{Automatic evaluation scores of each explainable TKGR model under three datasets. $N$ denotes the number of nodes output by the models as an explanation.}
\label{tab:n5}
\end{table*}

In this section, we conduct extensive experiments to evaluate our proposed GradXKG's performance on RGCN-based TKGR models. Specifically, we investigate four research questions:
\begin{itemize}
    \item \emph{Does GradXKG provide more sufficient explanations than model-specific approaches for all evaluated RGCN-based TKGR models?}
    \item \emph{Compared to perturbation-based methods, does GradXKG offer comparable or superior explanation quality with lower time costs?}
    \item \emph{Does the proposed RGCN explainer in GradXKG contribute to explanation quality?}
    \item \emph{Can GradXKG provide relevant, validated explanations?}
\end{itemize}
\subsection{Experimental Settings}
\subsubsection{Datasets} Currently, there are no TKGR datasets designed specifically for explainable TKGR model evaluation that contain explanations associated with each quadruple (fact). Therefore, we first utilize the standard ICEWS14~\cite{garcia2018learning}, ICEWS18~\cite{jin2020recurrent}, and ICEWS0515~\cite{garcia2018learning} datasets from the Integrated Crisis Early Warning System to evaluate our proposed GradXKG method implemented on TKGR models, following previous work. We then designed an automatic evaluation approach that is based on faithfulness and trustworthiness criteria and does not require a golden standard testing set with explanations. Additionally, we conducted a human evaluation to further assess the performance of the generated explanations.
\subsubsection{Baselines} We implemented our approach on two state-of-the-art RGCN-based TKGR models: CEM~\cite{li2022complex} and RE-NET~\cite{jin2020recurrent}. Since there are currently no universal or gradient-based explanation methods for TKGR models to compare against, we implemented two model-specific explainable TKGR methods as baselines: TimeTraveler~\cite{sun2021timetraveler} and TLogic~\cite{liu2022tlogic}. We also implemented a simple perturbation-based explanation method on both CEM and RE-NET as additional baselines to benchmark GradXKG. To analyze the effectiveness of the proposed RGCN explainer, we conducted an ablation study by replacing RGCN explainer with integrated gradients. In total, we compared eight methods:
\textbf{xCEM:} A curriculum learning-based RGCN TKGR model. xCEM denotes that GradXKG is implemented on CEM. \textbf{xRE-NET:} A autoregressive RGCN TKGR model. xRE-NET means GradXKG implemented version. \textbf{per-CEM:} CEM model with perturbation-based method-generated explanations. \textbf{per-RE-NET:} RE-NET model with perturbation-based method-generated explanations \textbf{IG-CEM:} The whole CEM model is explained by integrated gradients. \textbf{IG-RE-NET:} The whole RE-NET model is explained by integrated gradients. \textbf{TimeTraveler:} A RL-based explainable TKGR model. This model does not contain RGCN architecture. \textbf{TLogic:} A temporal random walk-based explainable TKGR model that does not use RGCN.
\begin{table*}[]
\begin{tabularx}{\textwidth}{l >{\raggedright\arraybackslash}X >{\raggedright\arraybackslash}X >{\raggedright\arraybackslash}X>
{\raggedright\arraybackslash}X >{\raggedright\arraybackslash}X >{\raggedright\arraybackslash}X>
{\raggedright\arraybackslash}X >{\raggedright\arraybackslash}X >{\raggedright\arraybackslash}X}
\toprule
\multirow{3}{*}{Models} & \multicolumn{9}{c}{N=9}  \\ \cmidrule{2-10} 
& \multicolumn{3}{c}{ICEWS14} & \multicolumn{3}{c}{ICEWS0515} & \multicolumn{3}{c}{ICEWS18} \\ \cmidrule{2-10} 
& Fidelity     & Stability  &Time Cost & Fidelity      & Stability   &Time Cost & Fidelity     & Stability  &Time Cost      \\ \midrule
TLogic      & 0.44         & 0.53         & 1.3          & 0.47          & 0.51         & 1.7         & 0.40         & 0.54         & 2.1              \\
TimeTraveler    & 0.50   & 0.49   & 3.0    & 0.51          & 0.52        & 2.7        & 0.40         & 0.56         & 3.2              \\
per-RE-NET    & 0.78   & 0.87  & 76.5          & 0.88          & 0.83         & 70.9         & 0.71         & 0.90         & 78.1              \\
per-CEM          & 0.83         & 0.93           & 97.6          & 0.75         & 0.95         & 92.3         & 0.80        & 0.94  & 98.5              \\
IG-RE-NET     & 0.41    & 0.53    & 2.4   & 0.43          & 0.55         & 3.1         & 0.44         & 0.54         & 2.5         \\
IG-CEM   & 0.47   & 0.52   & 1.5   & 0.46   & 0.40   & 1.3 & 0.36         & 0.42         & 1.9 \\ \midrule
xRE-NET     & 0.63    & 0.77    & 1.4   & 0.71          & 0.75         & 1.3         & 0.70         & 0.78         & 1.8          \\
xCEM   & 0.74   & 0.85   & 1.0   & 0.70   & 0.78   & 1.0 & 0.72         & 0.74         & 1.0 \\ \bottomrule
\end{tabularx}
\caption{Fidelity and stability scores of each explainable TKGR model under three datasets. $N$ denotes the number of nodes output by the models as an explanation.}
\label{tab:n9}
\end{table*}
\subsubsection{Evaluation Metrics}
Given the lack of explainable TKGR model evaluation datasets, we assessed the quality of generated explanations through both automatic and human evaluation. We utilized three automatic evaluation metrics, namely fidelity, stability, and time complexity. We also used three human evaluation criteria, namely validity, relevance, and sufficiency. 

Following previous work on GNN explanation~\cite{yuan2022explainability}, we considered the following two automatic criteria. 1) Fidelity, which measures whether the explanations are faithfully important to the model’s predictions. Formally, fidelity can be defined by measuring the prediction gap between the original graph input and graph input that lacks important nodes: 
\begin{equation}
Fidelity = \frac{1}{N} \sum_{i=1}^{N}(f(\mathcal{G}_i)-f(\mathcal{G}_i^{m_i}))
\end{equation}where $f(\cdot)$ denotes the TKGR model, $\mathcal{G}_i$ is the input graph, and $\mathcal{G}_i^{m_i}$ is the graph with important node $m_i$ removed. 2) Stability, which measures the ability of the explainable TKGR model to produce a consistent explanation when the input graph is slightly altered or perturbed. We measure stability as follows:
\begin{equation}
Stability=\frac{|\mathcal{N}_p\cap \mathcal{N}_o|}{|\mathcal{N}_o|}
\end{equation}where $\mathcal{N}_p$ is the set of important nodes generated when the input graph is altered and $\mathcal{N}_o$ is the set of important nodes generated when the input graph is the original one. In addition to fidelity and stability, we also report the time complexity of each explanation method.

In addition to the automatic evaluation, we also included three human evaluation metrics as follows. Two annotators evaluated explanations based on three metrics: validity, relevance, and sufficiency criteria using a 3-point scale (1=low, 3=high). Validity measures whether the explanation is logically valid based on the annotator's knowledge, indicating if the model learned the expected features. Relevance measures whether selected nodes are connected meaningfully to the target node. Sufficiency measures if the explanation contains enough nodes to sufficiently understand the reasoning behind the prediction, since explanations with too few nodes may lack context. Note that there may be a trade-off between relevance and sufficiency, since more nodes may increase sufficiency but decrease relevance. The detailed evaluation guideline is provided in Appendix~\ref{sec:anno}.
\subsection{Automatic Evaluation Results}
We conduct two experiments where the number of nodes output in the explanation is constrained to 5 and 9, representing reasoning paths of lengths 3 and 4, respectively. This allows us to evaluate the performance of GradXKG compared to TimeTravler and TLogic, which have constraints on the length of the reasoning path they can output. The results in Table~\ref{tab:n5} demonstrate that with 5 node explanations, GradXKG achieves higher fidelity and stability scores than TimeTravler and TLogic on all datasets. Specifically, on the ICEWS05015 dataset, the xCEM and xRE-NET variants of GradXKG obtain fidelity scores of 0.73 and 0.71, and stability scores of 0.77 and 0.78; compared to 0.58 and 0.57 fidelity, and 0.55 and 0.54 stability for TimeTravler and TLogic, respectively. This shows that despite being a universal explainer without special architectural constraints, GradXKG can generate more trustworthy explanations than previous constrained models.

We also compare GradXKG to perturbation-based methods, which intrinsically optimize explanation fidelity and stability. As expected, these methods slightly outperform GradXKG but at a drastically higher computational cost. For instance, per-CEM takes 97.8 times longer than xCEM on the ICEWS14 dataset, demonstrating the efficiency of GradXKG's ``explain-per-use'' approach.

Note that TLogic is not theoretically runnable when the reasoning path length is longer than 5. Therefore, we also set the generated number of nodes requirement to be 9 and show the results in Table.~\ref{tab:n9}. Compared to the 5-node experiment, the 9-node experiment shows that most perturbation and gradient-based methods maintain a similar performance across all three automatic criteria. This is because they score all nodes and select the top ones. However, TimeTravler and TLogic slow down considerably and have lower fidelity and stability as they are constrained by needing to traverse explanation trajectories. Overall, the experiments demonstrate GradXKG's ability to generate trustworthy explanations efficiently despite its model-agnostic nature.

\subsection{Human Evaluation and Qualitative Analysis}
\textbf{Human Evaluation} To further evaluate the quality of the generated explanations, we randomly selected 50 generated explanations from each compared method and asked two experienced annotators to rate the quality of the explanations based on the aforementioned criteria. As shown in Fig.~\ref{fig:box}, the saliency map explanations generated by GradXKG achieved higher sufficiency scores compared to the other methods. This demonstrates that compared to reasoning path explanations, like those from TLogic and TimeTraveler, the proposed GradXKG method can provide more contextual information by assigning importance scores to all nodes in the TKG. Furthermore, the relevance scores of xCEM are comparable to those of TLogic, indicating that the importance scores of each node are aligned with its contribution to the prediction. We also observed differences in the evaluation results between xRE-NET and xCEM even when using the same GradXKG method. This discrepancy can be attributed to the fact that the explainer's performance also heavily depends on the capabilities of the original model.

\textbf{Qualitative Analysis} We include two saliency maps generated by xCEM, the best variant of GradXKG, when the following query is submitted to the model: \emph{(National Council for Peace and Order of Thailand, Make Statement, ?, 2014-07-24)} and \emph{(John Kerry, Engage in negotiation, ?, 2014-07-24)}. The correct answers to these two queries are \emph{Thailand} and \emph{Benjamin Netanyahu}, respectively. As shown in Fig.~\ref{fig:example}, GradXKG revealed that the CEM model relied on nodes/entities that are highly related to the queries in order to make the correct predictions. For instance, for the query \emph{(John Kerry, Engage in negotiation, ?, 2014-07-24)}, CEM attended to other U.S. government officials like Barack Obama and international political organizations such as ``UN Security Council'' as well as politicians in the Middle East region. This demonstrates that with the proposed GradXKG method, RGCN-based TKGR models' predictions can be accurately interpreted and explained. 
\begin{figure*}[]
\centering
\includegraphics[width=0.95\textwidth]{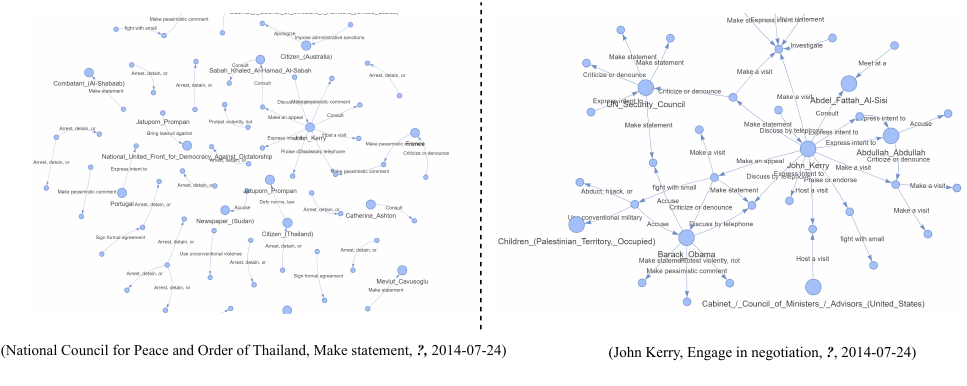}
\caption{The saliency maps generated by xCEM, where big nodes are the selected important nodes by GradXKG. We omit the rest of the TKGs and timestamps for better illustration.}
\label{fig:example}
\end{figure*}

\begin{figure*}[]
\centering
\includegraphics[width=0.95\textwidth]{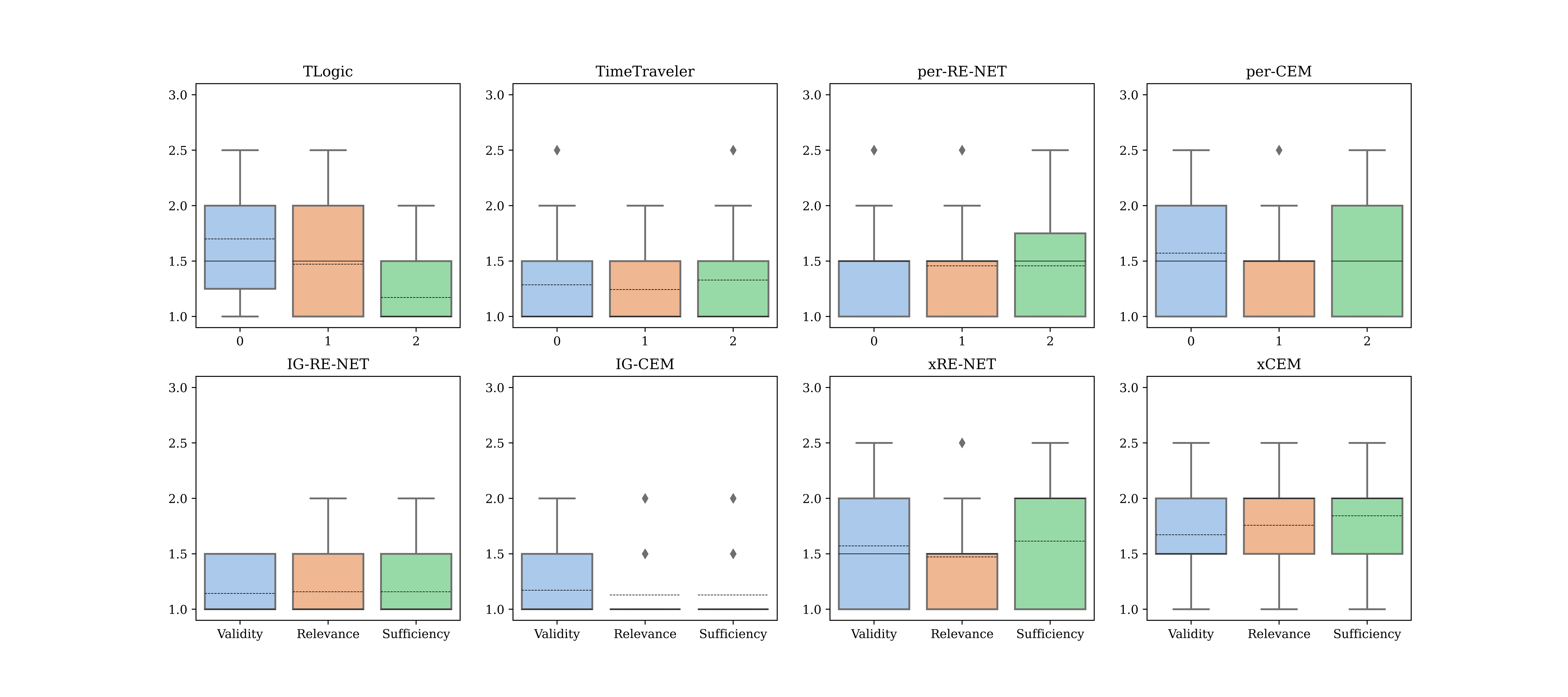}
\caption{The box plots of human evaluation for each criterion of each baseline method. The dashed line denotes the mean value and the bold line indicates the median value.}
\label{fig:box}
\end{figure*}
\subsection{Ablation Study}
To evaluate the effectiveness of the proposed RGCN explainer, we also conducted an ablation study by replacing the RGCN explainer with integrated gradients (IG). The node importance score is calculated by averaging the IG scores across the whole vector representation of that node. As shown in Tables~\ref{tab:n5} and~\ref{tab:n9}, the IG-based RE-NET and CEM methods perform much worse in terms of fidelity and stability. This demonstrates the effectiveness and trustworthiness of the proposed RGCN explainer. Furthermore, as illustrated in Fig.~\ref{fig:box}, the human evaluation also suggests that IG-only RE-NET or CEM methods cannot provide a saliency map with high validity, relevance, and sufficiency. Together, this shows the necessity and effectiveness of the proposed RGCN explainer.

\section{Conclusion}
In this paper, we propose GradXKG, a universal explain-per-use temporal knowledge graph explainer. The proposed approach can be applied to most RGCN-based TKGR models to generate saliency maps that indicate node importance (contribution) towards the prediction as the explanation. Our proposed explainer model tracks the gradient flow in the whole model by introducing, and combining, two explainers: a Grad-CAM-based RGCN explainer and an integrated gradient-based top-layer explainer. Our extensive experiments demonstrate that with GradXKG, most state-of-the-art TKGR models can be explained with high fidelity and stability in a time-efficient manner. 

\bibliographystyle{ACM-Reference-Format}
\bibliography{reference}

\appendix

\section{Annotation Guideline}
\label{sec:anno}
We provide detailed evaluation guidelines for human evaluations in this section.
\textbf{Overview}

You will be shown explanations generated by a model for temporal knowledge graph reasoning tasks. Your task is to evaluate the quality of the explanation based on 3 criteria using a 1-3 rating scale:
\begin{itemize}
    \item Validity (1=low, 3=high) - Does the explanation seem logically valid based on your own knowledge? Note that invalid explanations indicate the model may not have learned expected features.
    \item Relevance (1=low, 3=high) - Do the nodes selected seem relevant for explaining the model's prediction? Relevant nodes should meaningfully connect to the target node being explained.
    \item Sufficiency (1=low, 3=high) - Does the explanation contain enough nodes to sufficiently understand the reasoning behind the prediction? Too few nodes may lack the needed context.
\end{itemize}
The detailed criteria for each score is described below:
 Note there is a tradeoff between sufficiency and relevance - more nodes increase sufficiency but may reduce relevance.
 
 \textbf{Validity}
\begin{itemize}
    \item Assign a score of 1 if the explanation is illogical or contains factual errors based on your knowledge.
    \item Assign a score of 2 if you are uncertain of the validity or logic of the explanation.
    \item Assign a score of 3 if the explanation seems factually and logically valid.
\end{itemize}

 \textbf{Relevance}
\begin{itemize}
    \item Assign a score of 1 if the nodes seem unrelated to explaining the prediction.
    \item Assign a score of 2 if some nodes are relevant while others are not.
    \item Assign a score of 3 if all nodes clearly help explain the prediction through their connections.
\end{itemize}

 \textbf{Sufficiency}
\begin{itemize}
    \item Assign a score of 1 if the explanation does not provide enough context to understand the reasoning.
    \item Assign a score of 2 if the explanation provides some context but is still lacking.
    \item Assign a score of 3 if the explanation provides ample context to understand the reasoning.
\end{itemize}

\end{document}